\theoremstyle{plain}
\newtheorem{theorem}{Theorem}[section]
\newtheorem{definition}[theorem]{Definition}
\newtheorem{lemma}[theorem]{Lemma}
\theoremstyle{remark}
\newenvironment{sloppypar*}
 {\sloppy\ignorespaces}
 {\par}
\newcommand{\qedClaim}{\hfill \ensuremath{\Box}}
\newcommand{\midpointFunction}{\ensuremath{\mathrm{mid}}\xspace}
\newcommand{\longestEdge}{\ensuremath{\mathrm{E}_{\mathrm{max}}}\xspace}
\newcommand{\geometricMedian}{\ensuremath{\mathrm{Geo}}\xspace}
\newcommand{\geoSet}{\ensuremath{S_{\mathrm{geo}}}\xspace}
\newcommand{\trueBox}{\ensuremath{\mathrm{TH}}\xspace}
\newcommand{\medianBox}{\ensuremath{\mathrm{GH}}\xspace}
\newcommand{\radEnc}{\ensuremath{\mathrm{r}_{\mathrm{cov}}}\xspace}
\newcommand{\convexHull}{\ensuremath{\mathrm{Conv}}\xspace}
\newcommand{\ball}{\ensuremath{\mathcal{B}}\xspace}
\newcommand{\geoMD}{\ensuremath{\mathrm{MD}_{\mathrm{geo}}}\xspace}
\newcommand{\MD}{\ensuremath{\mathrm{MD}}\xspace}
\newcommand{\boxmean}{\ensuremath{\mathrm{BOX\!-\!MEAN}}\xspace}
\newcommand{\boxmedian}{\ensuremath{\mathrm{BOX\!-\!GEOM}}\xspace}
\newcommand{\mdmean}{\ensuremath{\mathrm{MD\!-\!MEAN}}\xspace}
\newcommand{\mdmedian}{\ensuremath{\mathrm{MD\!-\!GEOM}}\xspace}
\newcommand{\msg}{\texttt{msg}}
\newcommand{\trueCent}{\ensuremath{\nu^*}\xspace}
\newcommand{\trueMedian}{\ensuremath{\mu^*}\xspace}
\newcommand{\distance}{\ensuremath{\mathrm{dist}}\xspace}
\DeclareMathOperator*{\argmin}{arg\,min}
\begin{document}

\title{Approximate Agreement Algorithms for Byzantine Collaborative Learning}

\author{M\'elanie Cambus\footnote{Aalto university, Finland, melanie.cambus@aalto.fi} 
\and Darya Melnyk\footnote{TU Berlin, Germany, \{melnyk@tu-berlin.de, tijana.milentijevic@campus.tu-berlin.de, schmiste@gmail.com\}}
\and Tijana Milentijevi\'c\footnotemark[2]
\and Stefan Schmid\footnotemark[2] 
}

\date{}

\maketitle

\begin{abstract}
  In Byzantine collaborative learning, $n$ clients in a peer-to-peer network collectively learn a model without sharing their data by exchanging and aggregating stochastic gradient estimates. Byzantine clients can prevent others from collecting identical sets of gradient estimates. The aggregation step thus needs to be combined with an efficient (approximate) agreement subroutine to ensure convergence of the training process.
  In this work, we study the geometric median aggregation rule for Byzantine collaborative learning. We show that known approaches do not provide theoretical guarantees on convergence or gradient quality in the agreement subroutine. To satisfy these theoretical guarantees, we present a hyperbox algorithm for geometric median aggregation.  
  We practically evaluate our algorithm in both centralized and decentralized settings under Byzantine attacks on non-i.i.d. data. We show that our geometric median-based approaches can tolerate sign-flip attacks better than known mean-based approaches from the literature. 
\end{abstract}


\section{Introduction}

Distributed machine learning is an attractive alternative to traditional centralized training. By distributing the training process across multiple peers, computations can be parallelized and scaled up, while peers can retain their individual datasets locally: peers simply need to exchange and aggregate their stochastic gradient estimates. Distributed machine learning hence also provides peers with a degree of autonomy \cite{byzantine-primer}, which, combined with additional cryptographic techniques, improves privacy \cite{Grivet_S_bert_2021, zhang2020batchcrypt, differential-privacy-fed-learning, noble2023differentiallyprivatefederatedlearning}.

However, distributed machine learning also introduces new challenges. In particular, to ensure high-quality models as well as convergence of the training process, gradient aggregation requires that the peers agree on a similar set of vectors. Exact distributed agreement algorithms where the peers agree on the same vector are costly. We therefore allow the peers to compute output vectors that are close to each other, but not necessarily identical. This agreement type is called approximate agreement. Achieving approximate agreement is particularly challenging in distributed settings where some peers may be faulty or even malicious. Additionally, large-scale machine learning systems rely on user-generated data, which can be maliciously manipulated. 

This paper studies the approximate agreement problem in distributed training where some peers may be Byzantine. We focus on parameter-based attacks which involve altering local parameters, such as gradient or model weights \cite{shi2022challenges}. Parameter modification can be done randomly and non-randomly. Non-random modification includes altering the direction or size of the parameters based on the model learned from the local dataset. Possible non-random modification attacks are flipping the sign of gradients or increasing the magnitudes. Random modification implies randomly sampling a number and treating it as one of the parameters of the local model.

We present and revisit several gradient aggregation algorithms and study their robustness. As mean-based aggregation is sensitive to outliers, we are particularly interested in (geometric) median-based aggregation. We analyze the theoretical guarantees of different algorithms with respect to their approximation guarantee (how close they get to the geometric median of non-faulty peers).  
We also show that the prevalent safe area approaches for solving multidimensional approximate agreement do not give satisfying guarantees with respect to the geometric median. 
We complement our theoretical considerations with an empirical evaluation, studying the performance of different algorithms under crash failures and sign attacks.
For comparison, we also implement the MDA algorithm by El-Mhamdi et al.~\cite{jungle}, and the recently introduced box algorithm by Cambus et al.~\cite{centroid-paper} which uses the mean instead of the geometric median. 

\subsection{Our contributions}

In this section, we summarize our contributions. 
Note that Contributions (\ref{contrib:item1}) and (\ref{contrib:item2}) focus on algorithms used by clients to agree on an aggregation vector during one single learning round. Contributions (\ref{contrib:item3}) and (\ref{contrib:item4}) then empirically study the behavior and implications of our algorithm when executed in multiple rounds.

\begin{enumerate}
    \item\label{contrib:item1} We study gradient aggregation via the geometric median. Our goal is to approximate this popular aggregation vector in the distributed setting. To this end, we adapt popular agreement algorithms to the context of geometric median approximation. 
    The approximation of the geometric median is defined analogously to the approximation of the mean in~\cite{centroid-paper}. 
    We analytically show that agreement in the safe area (often considered in the literature to solve multidimensional approximate agreement) does not compute a vector that provides a bounded approximation of the geometric median. 
    We further prove that also the medoid-based aggregation rule Krum~\cite{blanchard2017machine,byzantine-primer} does not provide a bounded approximation of the geometric median. 
    Regarding the natural approach of approximate agreement based on computing the minimum diameter~\cite{jungle} and then applying the median aggregation rule, we formally show that this solution may not even converge.
    
    \item\label{contrib:item2} The results in~(\ref{contrib:item1}) show that existing algorithms do not provide bounded approximations of the geometric median. We present an algorithm based on hyperboxes that achieves a $2\sqrt{d}$-approximation of the geometric median and converges, where $d$ is the dimension of input vectors and $n$ is the number of nodes. We formally prove the respective properties for approximate agreement. 
    
    \item\label{contrib:item3} We empirically evaluate our algorithm for centralized and distributed collaborative learning. To this end, we consider non-i.i.d. data split among $10$ clients, one of whom is Byzantine. We study the algorithm under various Byzantine behaviors, such as crash failures and reversed gradient. Our results show that an accuracy of over $78\%$ can be achieved in all settings when using the hyperbox algorithm for the geometric median aggregation rule. 
    
    \item\label{contrib:item4} We empirically compare our results to known averaging agreement algorithms from the literature, such as minimum-diameter averaging \cite{jungle}, box algorithm for the mean \cite{centroid-paper}, simple geometric median and simple mean aggregation rules in distributed collaborative learning. In centralized collaborative learning, we additionally consider the Krum and Multi-Krum aggregation rules~\cite{blanchard2017machine,byzantine-primer}. We also provide a first practical evaluation of the box algorithm for the mean. 

\end{enumerate}
As a contribution to the research community, to facilitate follow-up work and ensure reproducibility, we will share our source code and experimental artifacts together with this paper (once accepted).

\subsection{Organization}

The remainder of this paper is organized as follows. We start by introducing collaborative learning and approximate agreement in Section~\ref{sec: background}. In Section~\ref{sec: model}, we present the theoretical definition of the approximation of the geometric median in the Byzantine setting and other definitions needed for the studied algorithms. Section~\ref{sec: theory} shows that some known strategies to approximate the geometric median fail and presents the hyperbox approach that provides a $2\sqrt{d}$-approximation. In Section~\ref{sec: practice}, we present our practical evaluation of the algorithms. Finally, in Section~\ref{sec: related_work}, we summarize the related work and conclude with a summary of this paper in Section~\ref{sec: conclusion}.

\section{Preliminaries}\label{sec: background}

We introduce here the concepts necessary for our contributions, first presenting the machine learning context and then giving the theoretical background regarding Byzantine agreement.

\subsection{Distributed machine learning}

We consider a system with $n$ nodes, also called clients, that have input vectors $v_1, \dots, v_n \in \mathbb{R}^d$. 
Each client $u_i$ has access to its own data, which follows an unknown distribution $\mathcal{D}_i$. 
In this system, we allow certain clients to be faulty and crash or send corrupted vectors. 
Non-faulty clients try to learn the parameters $\theta^i$ of a machine learning model, that ensures optimal accuracy over the joint data distribution across all clients in the system \cite{byzantine-primer}. Specifically, for a given parameter vector $\theta$, also known as weight vectors, and a data point $v \in \mathbb{R}^d$, the model incurs a real-valued loss $q(\theta , v)$. This function calculates how well the model with parameters $\theta$ predicts a given data point $v$. Therefore, each client's local loss function is:
\begin{equation} \label{eq:gradient}
    Q_i(\theta) = \mathbb{E}_{v\sim \mathcal{D}_i} [q(\theta, v)] \qquad \forall \theta \in \mathbb{R}^d.
\end{equation}
We make following assumptions on the loss functions of non-faulty nodes, denoted by $h$:
\begin{enumerate}
    \item Loss function $q$ must be differentiable with respect to $\theta$.
    \item Local loss functions $Q_i$ are non-negative, i.e. $Q_i \geq 0$ for all non-faulty nodes $u_i, \forall i \in [h]$.
    \item Local loss functions are $L$-smooth \cite{bottou2018optimization}, i.e. there exists a constant $L$ such that $\forall \theta, \theta' \in \mathbb{R}^d, \forall j \in [h]$:
    \begin{equation*}
         \Big\| \nabla Q_j(\theta) - \nabla Q_j(\theta') \Big\|_2 \leq L \Big\| \theta - \theta' \Big\|_2
    \end{equation*}
\end{enumerate}

Clients must work together in the system to solve the optimization problem, due to the differences in the data distributions \cite{boyd2011distributed}. However, when data distributions are identical, collaboration remains beneficial as it reduces the computational costs on each client, making the learning process more efficient \cite{boyd2004convex, decentralized-outperform}.

\textbf{Centralized collaborative learning model. }
In the centralized collaborative learning model, there is one server that coordinates the learning process. The dataset is split among clients and is preserved locally. Each client has a local model and at the beginning of every round, the weights of the local model are set to the weights of the global model. Clients compute a stochastic estimate $g^{(i)}_t$ of the gradient $\nabla Q_i(\theta^{(i)}_t)$ for all local models $\theta^{(i)}_t$ in iteration $t$.
The gradient estimate $g^{(i)}_t$ is computed by drawing a data point $v$ or a sample from the local data generating distribution $\mathcal{D}_i$:
\begin{equation} \label{eq:gradient-centralized}
    g^{(i)}_t = \nabla q(\theta^{(i)}_t, v) \qquad \text{with} \quad v\sim \mathcal{D}_i.
\end{equation}
With the help of Equation~\ref{eq:gradient}, the gradient estimate $g^{(i)}_t$ equals the true gradient $\nabla Q_i(\theta^{(i)}_t)$ in expectation. 

The global entity then receives stochastic gradients $g_t$ from all clients and computes an aggregate of the received messages $\widehat{g_t}$. Consequently, the global model's parameter $\theta_t$ is updated to $\theta_{t+1}$ as follows:
\begin{equation*}
    \theta_{t+1} = \theta_t - \gamma_t \cdot \widehat{g_t}
\end{equation*}
with $\gamma_t$ being the learning rate. In the next round, local models again set their weights to the weights of the global model and the procedure is repeated. The number of iterations $T$ is parameterized and decided on in advance, before the learning process begins. The algorithm stops after $T$ iterations. The performance of the global model is measured after every round and the accuracy is reported.

\textbf{Decentralized collaborative learning model. }
The problems in centralized collaborative learning occurs when transferring a large machine learning model from a central server. First, the communication cost is high, since the learning process is done in $T$ iterations and the parameters of the central model are sent to all clients at each round. Second, the central server decides on the global update, which does not necessarily suit all clients since they do not follow the same data distribution. Finally, the central server is also a single point of failure.

A natural way to address these drawbacks is to decentralize the model. In this architecture, there is no global entity. As in the centralized model, the data is split among clients and is kept locally. Each client has a local model which is created once at the beginning and stored for updating throughout the iterations. 
Each client $u_i$ computes a stochastic gradient $g^{(i)}_t$ of its local loss function's gradient $\nabla Q_i(\theta^{(i)}_t)$ in the same way as in Equation~\ref{eq:gradient-centralized} in the centralized collaborative learning model. 
However in this decentralized model, clients broadcast their gradients $g^{(i)}_t$ to all other clients in the system. Each client then gathers gradients from all other clients, and aggregates them using an aggregation function.

It cannot be guaranteed that clients agree on the same gradient aggregation, as there is no central server maintaining a global model and faults can occur during the communication process. 
To ensure gradient aggregations to be as close as possible in between clients, we use agreement algorithms that run in multiple sub-rounds. At each sub-round, each client sends its vector to all other clients. Upon receiving the messages, each client performs an aggregation rule to these vectors. This output will be the input of the next sub-round. The number of sub-rounds is predefined and in this work we choose $\log t$ sub-rounds, where $t$ is the "big" iteration. This result is taken from the El-Mhamdi et al. work \cite{jungle}.
In the last sub-round of iteration $t$, clients update their models and enter the next iteration $t+1$. The process is repeated until the stopping criteria is met. Local models are tested after each iteration and their accuracy is reported.

\subsection{Aggregation rules}
This work compares different ways of computing aggregates of the clients' local gradients after each round of the learning process, in both the centralized and the decentralized collaborative learning models. We aim at assessing how well those aggregation rules react to the presence of faulty clients in the system. More precisely, the aggregation rules we consider are the geometric median and the mean of correct input vectors. 

The \textbf{mean} is defined as follows:
\begin{definition}[Mean]
    The mean of a finite set of $n$ vectors ${v_i, i\in [n]}$ is $$\frac{1}{n}\sum_{i=1}^n v_i.$$
\end{definition}
We denote by $\trueMedian$ the true geometric median and $\trueCent$ the true mean, which is computed only from vectors of non-faulty nodes. 

The geometric median minimizes the sum of Euclidean distances to all points in the system.
We define the \textbf{geometric median}, following the definition provided by Small~\cite{geom-definition}.
\begin{definition}[Geometric median]
    Consider a set of $n$ vectors $\{v_{1},v_{2},\dots ,v_{n}\}$ with each $v_{i}\in \mathbb {R} ^{d}$, the geometric median of this set, denoted $\geometricMedian(\{v_{1},v_{2},\dots ,v_{n}\})$, is defined as
    $${\underset {\mu\in \mathbb {R} ^{d}}{\operatorname {arg\,min} }}\sum _{i=1}^{n}\left\|v_{i}-\mu \right\|_{2}.$$
\end{definition}

Also other aggregation rules have been considered in the literature. We will compare our practical results to two popular aggregation rules -- Krum and Multi-Krum~\cite{blanchard2017machine,byzantine-primer}. 
The \textbf{Krum} aggregation rule is based on computing the medoids on subsets of $n-t$ vectors and choosing the medoid with the smallest total distance. Let $\{v_1,\ldots,v_k\}, k\ge n-t$ be the set of vectors received by the server. Let $C_j$ denote the set containing the indices of the $n-t-1$ closest vectors to $v_j$ from the set $\{v_1,\ldots,v_k\}\setminus v_j$. Then, 
\begin{align}\label{eq:Krum}
    \text{Krum}(v_1,\ldots, v_k) = v_i, \ \text{where} \quad i=\argmin_{i\in[n]} \sum_{l\in C_i} \lVert v_i - v_l\rVert.
\end{align}

\textbf{Multi-Krum}\label{krum}  is a generalization of Krum, where, instead of selecting one vector minimizing the sum of distances, the average of $q$ such vectors with smallest distances is chosen. Let $M(q)$ denote the set that contains $q$ vectors with smallest total distances to their $n-t-1$ closest neighbors. Then,
\begin{align}\label{eq:multi-krum}
    \text{Multi-Krum}_q(v_1,\ldots, v_k) = \frac{1}{q}\sum_{i\in M(q)} v_i.
\end{align}

\subsection{Multidimensional approximate agreement}\label{subsec: multidimApproxAgreement}

To be able to aggregate the local gradients in the presence of faulty nodes, we need algorithms that take into account potential faults. We hence study algorithms that allow nodes to agree on a vector in the presence of faulty nodes in the system. This problem is referred to as multidimensional approximate agreement.

We assume that $n$ nodes communicate with each other in a peer-to-peer fashion to agree on a common output. The communication is assumed reliable~\cite{BRACHA1987130, TouegRB}: let some node $u$ reliably broadcast a message $\msg_u$ and let $\msg_u(u_i)$ and $\msg_u(u_j)$ be the message from node $u$ reliably received by nodes $u_i$ and $u_j$ respectively, then $\msg_u(u_i)=\msg_u(u_j)$. 
In the \emph{Byzantine agreement} problem, the task is to agree on a common output in the presence of $t\le n/3$ arbitrary node failures, known as \emph{Byzantine failures}. 
Motivated by the machine learning application, we consider multidimensional inputs. More specifically, the input of each node is a vector in $\mathbb{R}^d$, where $d$ is the dimension of the vector. 
We assume that the communication between nodes is synchronized, i.e., the nodes are communicating in rounds. 
Synchronous Byzantine agreement requires $t+1$ rounds~\cite{FischerLynchMinRounds}, which is slow if many Byzantine nodes can be present in the system. Hence, similarly to~\cite{jungle}, we relax the agreement condition. We consider $\varepsilon$-approximate agreement, which only requires the output vectors $v_i$ and $v_j$ of any two nodes $u_i$ and $u_j$ to satisfy $\lVert v_i - v_j\rVert_2 < \varepsilon$. 

The standard algorithm for multidimensional approximate agreement, referred to later in this paper as the \textbf{safe area algorithm}, is based on each node repeatedly computing a vector inside a polytope called the safe area, and sharing it with the other nodes in the next round. Formally, the safe area is defined as follows.

\begin{definition}[Safe area~\cite{multidim-approx-agreement}]
    Consider $n$ vectors $\{v_1, \dots ,v_n\}\eqqcolon V$, where $t<n/(\max\{3,d+1\})$ of which can be Byzantine. Let $C_1,\ldots, C_{\binom{n}{n-t}}$ be the convex hulls of every subset of $V$ of size $n-t$. The \textit{safe area} is the intersection of these convex hulls: $$\bigcap_{i\in \left[\binom{n}{n-t}\right]} C_i.$$
\end{definition}

In~\cite{multidim-approx-agreement}, the authors show that the safe area exists (i.e., the intersection is non-empty) if $t<n/(\max\{3,d+1\})$. The strong guarantee this algorithm gives on its output makes it popular. Indeed, the output vector of each node is guaranteed to be in the convex hull of all non-faulty input vectors. However, the condition $t<n/(\max\{3,d+1\})$ implies that the algorithm cannot be used in the presence of faulty nodes when $n\leq d$, which is the case in our distributed machine learning setting. The safe area algorithm is hence only of theoretical interest to us. 

Another algorithm aiming at solving multidimensional approximate agreement is the \textbf{Minimum Diameter Averaging (MDA) algorithm}~\cite{jungle}. This algorithm works as follows. In each round, the nodes receive the messages and determine a set $\MD$ of $n-t$ received vectors that has the minimum diameter (note that such a set is not unique). The new input vector for the following round is computed as the mean of all vectors in $\MD$. 
Observe that the output vector is not necessarily inside the convex hull of all non-faulty input vectors.

Recently, another algorithm has been introduced to approximate the mean aggregation rule~\cite{centroid-paper}. This algorithm, referred to in this work as the \textbf{hyperbox algorithm}, is based on picking a vector in the intersection of hyperboxes. The computed output vector of each node is guaranteed to be inside a so-called trusted hyperbox, which is defined as follows. 

\begin{definition}[Trusted hyperbox]
    Let $f\leq t$ be the number of Byzantine nodes and let $v^*_i$, $i \in [n-f]$ denote the true vectors. Let $v^*_i[k]$ denote the $k^{th}$ coordinates of these vectors. 
    The \emph{trusted hyperbox} $\trueBox$ is the Cartesian product of  
    $\trueBox[k]\coloneqq\bigl[\min_{i\in [n-f]}v^*_i[k], \max_{i\in [n-f]}v^*_i[k]\bigr], $ for all $k\in[d].$

\end{definition}

Since the trusted hyperbox cannot be computed locally, the algorithm is based on computing local hyperboxes that are guaranteed to lie inside $\trueBox$.

\begin{definition}[Locally trusted hyperbox]
    Let $v_1, \dots, v_{m_i}$ be the vectors received by node $u_i$, where $m_i$ is the number of messages received by node $u_i$. 
    The number of Byzantine values for each coordinate is at most $m_i-(n-t)$. 
    Denote $\phi_i: [m_i] \to [m_i]$ a bijection s.t. $v_{\phi_i(j_1)} [k]\leq v_{\phi_i(j_2)} [k], \forall j_1, j_2\in [m_i]$. 
    The locally trusted hyperbox computed by node $u_i$ is the Cartesian product of $\trueBox_i[k] \coloneqq \bigl[v_{\phi_i(m_i-(n-t)+1)}[k], v_{\phi_i(n-t)}[k]\bigr]$ for all $k\in[d]$.

\end{definition}

The algorithm works as follows: Let $m\ge n-t$ be the number of messages received by node $u$ in round $r$. Node $u$ computes the means $A_1,\ldots, A_{\binom{m}{n-t}}$ of every subset of $n-t$ received vectors. Then, the intersection of the locally trusted hyperbox and the smallest coordinate-parallel hyperbox containing $A_1,\ldots, A_{\binom{q}{n-t}}$ is computed. The new input vector for round $r+1$ is computed as the midpoint (see \Cref{def:midpoint}) of the intersection of the hyperboxes.

\section{Model}\label{sec: model}

In order to define different algorithms that aim at getting as close as possible to the geometric median of non-faulty nodes, we need a measure of "how close" the output of an algorithm is from this true geometric median. However, we need to do so considering how close it is possible to get to this true geometric median since achieving a "perfect" output in the presence of Byzantine nodes is impossible. We hence start by defining an approximation ratio for the geometric median, as in~\cite{centroid-paper}. 
We then give some definitions that will be necessary to adapting the MDA and hyperbox algorithms to the geometric median aggregation rule. 

From now on, $t$ refers to the maximum number of Byzantine nodes that can be present in the system, and $f \leq t$ refers to the true amount of Byzantine nodes (not known by non-faulty nodes). We also sometimes refer to non-faulty nodes as \textit{true nodes}, and vectors from non-faulty nodes as \textit{true vectors}. Hence, the geometric median of non-faulty nodes will be called \textit{true geometric median}, and similarly for the mean. 

\subsection{Approximation of the geometric median in the Byzantine setting}

Since in the Byzantine setting there are instances in which no algorithm can identify the faulty nodes, we need to consider the set of all possibly non-faulty geometric medians if $t$ Byzantine nodes were present in the system.

\begin{definition}
    We define $\geoSet$ as 
    \begin{align*}
        \geoSet = \{\geometricMedian(\{v_i, \forall i\in I\}) ~ |~ I\subseteq [n], |I| = n-t\}.
    \end{align*}
\end{definition}

In order to use $\geoSet$ as a basis for the definition of approximation of the true geometric median in the Byzantine setting, we need to prove the following lemma.

\begin{lemma} \label{lemma:geom-in-convex}
    The true geometric median $\trueMedian$ is inside the convex hull of possible geometric medians of each correct node: $\trueMedian \in \convexHull(\geoSet(i))$, for all $i \in [n]$. 
\end{lemma}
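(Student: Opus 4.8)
The plan is to reduce the statement to one geometric fact about the geometric median and then prove that fact by a supporting-hyperplane argument. \emph{Reduction.} First I would unpack $\geoSet(i)$: since node $i$ is correct and messages are delivered by reliable broadcast, it receives (at least) the $n-f$ vectors of all true nodes, so $\geoSet(i)$ contains $\geometricMedian(\{v_j : j\in I\})$ for every index set $I$ of size $n-t$ made up of true indices; as $f\le t$ we have $n-t\le n-f$, so all such $I$ are available, and adding Byzantine vectors to the pool only enlarges $\geoSet(i)$. Hence it suffices to prove the purely geometric claim: if $H$ is a set of $n-f\ge n-t$ points, then $\trueMedian=\geometricMedian(\{v_j : j\in H\})$ lies in $\convexHull\bigl(\{\geometricMedian(\{v_j : j\in I\}) : I\subseteq H,\ |I|=n-t\}\bigr)$.

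\emph{Supporting hyperplane.} By the supporting-hyperplane theorem, $\trueMedian$ lies in that convex hull iff for every direction $w$ there is an admissible $I$ with $\langle w,\geometricMedian(\{v_j:j\in I\})\rangle \ge \langle w,\trueMedian\rangle$. Fix $w$ and take $I_w\subseteq H$ to be the $n-t$ true indices with the largest values $\langle w,v_j\rangle$; equivalently, $I_w$ is obtained from $H$ by discarding the $t-f$ true vectors most extreme in direction $-w$. It then remains to show $\langle w,\geometricMedian(\{v_j:j\in I_w\})\rangle\ge\langle w,\trueMedian\rangle$.

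\emph{Monotonicity.} The heart of the proof is the statement: deleting from a finite point set a point that is minimal in the direction $w$ cannot decrease the $w$-coordinate of the geometric median. Given this, I would delete the $t-f$ discarded vectors one at a time — after each deletion the next vector to delete is again $w$-minimal in the remaining set, so the $w$-coordinate of the geometric median only increases and the inequalities chain to give the claim of the previous paragraph. For a single deletion, write $\mu=\geometricMedian(P)$ and $\mu'=\geometricMedian(P\setminus\{p\})$ with $\langle w,p\rangle=\min_{q\in P}\langle w,q\rangle$; the degenerate cases (the geometric median not unique, or coinciding with a data point) can be reduced to the generic case by a perturbation/continuity argument, and in the generic case one starts from the first-order optimality $\sum_{q\in P}(\mu-q)/\lVert\mu-q\rVert=0$, which gives that the gradient of $x\mapsto\sum_{q\in P\setminus\{p\}}\lVert x-q\rVert$ at $\mu$ equals $(p-\mu)/\lVert\mu-p\rVert$ — a vector with non-positive $w$-component, since $\langle w,p\rangle\le\langle w,\mu\rangle$ — and combines this with the corresponding optimality at $\mu'$ and convexity of the two objectives.

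\emph{Main obstacle.} I expect the single-deletion step to be where the real work lies: plain convexity only yields $\langle\mu-p,\mu'-\mu\rangle\ge0$ and $\langle\mu'-p,\mu'-\mu\rangle\ge0$, which do not pin down the sign of $\langle w,\mu'-\mu\rangle$ unless $w$ lies in the cone spanned by $\mu-p$ and $\mu'-p$, and this need not hold. Pushing it through seems to require genuinely exploiting that the objectives are sums of Euclidean norms — for instance a homotopy that removes the weight on $p$ continuously while tracking the $w$-coordinate of the weighted geometric median, or a direct comparison of $\sum_{q\in P\setminus\{p\}}\lVert\cdot-q\rVert$ at $\mu'$ and at a shift of $\mu'$ toward the hyperplane $\{\langle w,\cdot\rangle=\langle w,\mu\rangle\}$ — with the degenerate configurations handled separately. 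Once this monotonicity step is established, the reductions above assemble into the full proof.
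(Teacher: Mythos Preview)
Your route differs from the paper's. The paper argues face-by-face on $\convexHull(\{\mu_1,\ldots,\mu_k\})$: for each supporting half-space $H^+$ it asserts that every partial median $\mu_j\in H^+$ drags at least $(n-t)/2$ of its own data points into $H^+$, and a union over the $\binom{n-f}{n-t}$ subsets then yields $\ge (n-f)/2$ true vectors in $H^+$, forcing $\trueMedian\in H^+$. You instead reduce, via supporting hyperplanes and a greedy choice of $I_w$, to a monotonicity-under-deletion statement for the geometric median. (As an aside, the half-space property the paper invokes is a one-dimensional median fact that fails for the geometric median in $\mathbb{R}^d$, $d\ge2$ --- take three points at $(\pm1,-\varepsilon)$ and $(0,M)$ with $M$ large: the median has positive second coordinate but only one of the three points lies in $\{y\ge0\}$ --- so the paper's argument has its own lacuna at the analogous spot.)

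Your reduction and the chaining over successive deletions are sound, but the single-deletion monotonicity step you isolate is a genuine gap, and your own obstacle analysis is accurate. Along the weight-homotopy one gets $H\,\dot\mu = -(\mu-p)/\lVert\mu-p\rVert$ with $H=\sum_q r_q^{-1}(I-u_qu_q^\top)$, and $\langle w,\mu-p\rangle\ge0$ does not by itself force $\langle w,H^{-1}(\mu-p)\rangle\ge0$: a positive-definite $H$ can rotate the displacement away from $w$. Exploiting the extra structure --- that $p$ is $w$-minimal among \emph{all} the $q$'s, which constrains the unit vectors $u_q$ and hence $H$ --- is exactly the missing idea, and neither of the two strategies you sketch (continuous weight removal, or comparing the objective at a shift of $\mu'$) is carried out. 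Until that step is established, the proposal remains a plan rather than a proof.
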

\begin{proof}
    Each correct node $u_i$ computes geometric medians on subsets of size $n-t$ of received vectors. If $f=t$, then node $u_i$ would compute the true geometric median among other geometric medians, which implies $\trueMedian \in \geoSet(i)$. 
    If $f<t$, then the true geometric median is not necessarily an element of $\geoSet(i)$. In this case, there are multiple subsets of size $n-t$ containing only true nodes. Let us denote the geometric medians of these subsets by $\mu_1, \mu_2, \dots,\mu_k$ and call them partial true geometric medians. Note that each partial true geometric median differs from other partial true geometric medians by at least one vector from the subset it is computed on. Now, consider the convex hull spanned only by these partial true geometric medians. 
    Each face of this convex hull corresponds to a hyperplane that splits the space into two half-spaces. 

    One of those half spaces (including its boundary) contains all partial true medians. By definition of the geometric median, for each of those partial true medians, at least $\frac{n-t}{2}$ of the vectors it is computed on are in the same half space as the median. 
    This is true for each partial true median and each differs by a vector, meaning that at least $\frac{n-t}{2} + {n-f\choose n-t}-1\geq \frac{n-f}{2}$ vectors are in the half space containing the convex hull. 
    However, the true geometric median has to be in the half space that contains at least $\frac{n-f}{2}$ nodes (it is computed on $n-f$ nodes). Hence, it will be in the half space containing the convex hull. 

    Since this is true for every face of the convex hull, the true geometric median has to be included in the intersection of all half-spaces that have a face as boundary and contain the convex hull, i.e. the true geometric median has to be contained in the convex hull. 
    
    Finally, the convex hull of partial true geometric medians (the one containing the true geometric median as just shown) is included in the convex hull of $\geoSet$ (by inclusion of the sets). 
    Hence, the true geometric median $\trueMedian$ is inside the convex hull of all possible geometric medians $\convexHull(\geoSet(i))$.
\end{proof}

Now, we want to compute the closest possible vector to the true geometric median, which means getting as close as possible to the center of $\convexHull(\geoSet)$. 
Finding this center is equivalent to finding the center of the minimum covering ball around $\geoSet$. 
Hence, the best possible approximation vector of the true geometric median is the center of the minimum covering ball around the set of possible geometric medians $\ball(\geoSet)$.
The approximation definition follows.

\begin{definition}\label{def:approximation} 
    Let $\radEnc$ be the radius of the minimum covering ball of $\geoSet$. All vectors found at distance at most $c \cdot \radEnc$ from the true geometric median $\trueMedian$ provide an $c$-approximation of $\trueMedian$.
\end{definition}

\subsection{Minimum diameter approach for the geometric median}

In order to adapt the minimum diameter averaging approach mentioned in \Cref{subsec: multidimApproxAgreement} to the geometric median aggregation rule, we here formally define a subset of the initial vectors that has minimum diameter. 

\begin{definition}
    Consider a set of vectors $\{v_1, \dots, v_n\}$ s.t. $v_i\in\mathbb{R}^d, \forall i\in[n]$. The set $\geoMD$ is a subset of $\{v_1, \dots, v_n\}$ of size $n-t$ s.t.
    \begin{align*}
        \geoMD\in \arg \min_{\substack{I\subseteq[n]\\ |I|=n-t}}\max_{i, j\in I} \lVert v_i - v_j \rVert_2.
    \end{align*}
\end{definition}

\subsection{Hyperboxes approach for the geometric median}

In order to adapt the hyperbox algorithm described in \Cref{subsec: multidimApproxAgreement} to the geometric median aggregation rule, we here formally define the geometric median hyperbox, the midpoint function, and also the maximum length edge of a hyperbox.

\begin{definition}[Geometric median hyperbox]
    The geometric median hyperbox $\medianBox$ is the smallest hyperbox containing $\geoSet$, and the local geometric median hyperbox $\medianBox_i$ of node $u_i, \forall i \in [n]$ is the smallest hyperbox containing $\geoSet(i)$. 
\end{definition}

\begin{definition}[Midpoint]\label{def:midpoint}
The $\midpointFunction$ of a hyperbox $X$ is defined as 
$$\midpointFunction(X) = \Bigl(\midpointFunction\bigl(X[1]\bigr), \dots, \midpointFunction\bigl(X[d]\bigr)\Bigr), $$
where $X[k]$ is the set containing all $k^{th}$ coordinates of vectors of the set $X$, and the one-dimensional $\midpointFunction$ function returns the midpoint of the interval spanned by a finite multiset of real values.
\end{definition}

\begin{definition}[Maximum length edge (\longestEdge)]
     The length of the edge of maximum length of a hyperbox $H$ is defined as
         $$\longestEdge(H) = \max_{\substack{k\in[d]\\ v, w\in H}} \bigl|v[k] - w[k]\bigr|.$$    
 \end{definition}

\section{Theoretical analysis}\label{sec: theory}

In this section, we analyze algorithms that allow gradient aggregation in a single iteration of the learning process. Hence, when referring to convergence in this section, we talk about the convergence of an algorithm that aggregates several gradients for one single aggregation step. 

\subsection{Safe area, Krum, and $\boldsymbol{\MD}$ approaches}

Even though the standard approach for multidimensional approximate agreement is using a safe area algorithm, and such algorithms give strong guarantees on the output vector of each node as mentioned earlier in \Cref{subsec: multidimApproxAgreement}, it might not be the best way to agree on the true geometric median. 

\begin{theorem} \label{Safearea-unbounded-geom}
The approximation ratio of the true geometric median using the safe area algorithm is unbounded.
\end{theorem}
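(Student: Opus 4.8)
The plan is to construct an explicit family of instances in which the safe area is forced to lie far away from the true geometric median, while the radius $\radEnc$ of the minimum covering ball of $\geoSet$ stays bounded (or even goes to zero relative to that distance). Concretely, I would work in dimension $d=1$ first, since an unboundedness example there immediately lifts to all $d$ by padding with zero coordinates. Recall that the condition for the safe area to be nonempty is $t<n/(\max\{3,d+1\})$; so I would pick $n$, $t$ and $d$ satisfying this (e.g. $d=1$, $n=4$, $t=1$), place the $n-f$ true vectors in a tight cluster, and let the $f$ Byzantine vectors sit far away. The key observation is that the safe area is the intersection of the convex hulls of all $(n-t)$-subsets; each such subset of size $n-t$ may be forced to include some Byzantine coordinates, which drags every one of these convex hulls — and hence their intersection — toward the Byzantine vectors. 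Meanwhile $\geoSet$, being the set of geometric medians of $(n-t)$-subsets, can be made to have a tiny minimum covering ball because a geometric median of a set containing a majority of tightly clustered true points stays near that cluster.

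The key steps, in order, are: (1) fix the concrete numerical instance (positions of true and Byzantine vectors, a separation parameter $D$ that I will send to infinity); (2) compute, or at least lower-bound, the distance from every point of the safe area to the true geometric median $\trueMedian$ — I expect this to be $\Theta(D)$ because every $(n-t)$-subset's convex hull, and therefore the safe area, is pinned near the Byzantine cluster; (3) compute, or upper-bound, $\radEnc$, the radius of the minimum covering ball of $\geoSet$ — I expect this to be $O(1)$ (independent of $D$), since each geometric median in $\geoSet$ is close to the true cluster; (4) conclude that the ratio (distance to $\trueMedian$)$/\radEnc \to \infty$, so by \Cref{def:approximation} no finite $c$ works. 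I would also invoke \Cref{lemma:geom-in-convex} only to justify that the approximation notion is well-posed; the unboundedness itself comes purely from the explicit instance.

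The main obstacle I anticipate is step (2): I must make sure that the safe area is actually nonempty (otherwise the "approximation ratio of the safe area algorithm" is vacuous) and yet is genuinely far from $\trueMedian$ — this is a balancing act, because pushing the Byzantine points far away can shrink the safe area and one must verify the intersection of all $(n-t)$-hull does not collapse. The cleanest way around this is to choose the parameters so that $f<t$, i.e. fewer actual Byzantine nodes than tolerated, so that some $(n-t)$-subsets are all-true and keep the intersection honestly nonempty, while the remaining subsets still drag the intersection away from the true cluster; alternatively, place the Byzantine points so that they are "between" the true points in a configuration that keeps all $\binom{n}{n-t}$ hulls overlapping. A secondary, more mechanical obstacle is computing the one-dimensional geometric medians exactly for each $(n-t)$-subset to pin down $\geoSet$; in $1$D the geometric median is just the median, so this reduces to sorting, which keeps the computation elementary.
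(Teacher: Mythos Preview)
Your plan has a genuine conceptual gap in step~(2). You claim the safe area is ``dragged toward the Byzantine vectors'' because each $(n-t)$-subset may include Byzantine points. This is backwards: adding a Byzantine point to a subset \emph{enlarges} that subset's convex hull, but the safe area is the \emph{intersection} of all such hulls, and at least one of them (when $f=t$) or several (when $f<t$) consists entirely of true vectors. Hence the safe area is always contained in the convex hull of the true vectors --- this is exactly the validity guarantee of the safe-area construction. If you put all true vectors in a tight cluster, the safe area lies in that same cluster, and so does $\trueMedian$; the ratio is bounded. Concretely, in your $d=1$, $n=4$, $t=1$ instance with true points $0,\epsilon,2\epsilon$ and Byzantine point $D$, the safe area is $[\epsilon,2\epsilon]$, the true median is $\epsilon$, and $\geoSet=\{\epsilon,2\epsilon\}$ so $\radEnc=\epsilon/2$; the worst-case ratio is $2$, independent of $D$. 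Taking $f<t$ only makes things worse for you, since there are then more all-true subsets constraining the intersection.

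The paper's construction does the opposite: it \emph{spreads out} the true vectors. One true vector is placed at $v_0=0$, the remaining $d\cdot f$ true vectors are placed near $v=(x,0,\dots,0)$ in $d$ affinely independent groups of $f$, and all $f$ Byzantine vectors are put at $v_0$. Removing any one full group at $v_j$ yields an $(n-t)$-subset whose hull is a lower-dimensional simplex through $v_0$; the intersection over $j$ of these simplices collapses to the single point $v_0$, so the safe area is $\{v_0\}$, distance $x$ from $\trueMedian\approx v$. Meanwhile every $(n-t)$-subset still has a strict majority of its points near $v$ (for $d>3$, or $d=3$ with $f>1$), so every element of $\geoSet$ coincides with $v$ and $\radEnc=0$; no finite $c$ satisfies \Cref{def:approximation}. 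The missing idea in your proposal is that the adversary should pile Byzantine vectors onto a \emph{minority true position} to pin the safe area there, not place them far from all true vectors.
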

\begin{proof}
Assume the number of correct nodes is $d\cdot f+1$ and the number of Byzantine nodes is $f$. Let $v_0=(0,0,\dots,0)$ be the input of one of the correct nodes and all Byzantine nodes. The rest of the correct nodes are divided into $d$ groups of $f$ nodes and have input vectors $v_i$ for $i\in \{1,\dots,d\}$. 
These groups of nodes are at most $\epsilon$ far apart. We denote vector $v=(x,0,\dots,0)$ and $\epsilon_j=\epsilon \cdot e_j$, with $e_j$ as $j^{th}$ unit vector. Then, the input vectors of $d$ groups is $v_j=\epsilon_j+v$, for $j\in [d]$.

In order to compute the safe area, we must consider the hyperplanes spanned by all subsets of $(n-f)$ nodes. Note that all hyperplanes are distinct and they intersect only at the point $v_0$. Therefore, the point $v_0$ is the safe area.

There we can differentiate between two extreme medians: the true geometric median and the geometric median containing all Byzantine nodes and $d\cdot f-f$ nodes with vector $v$. The true geometric median contains all correct nodes and coincides with the vector $v$, so $\trueMedian=(x,0,\dots,0)$. Distance from the optimal median to the safe area is $x$. In the second extreme median, we consider $f+1$ nodes with input $v_0$ and $dt-t$ nodes with input $v$. These points lie on a line. Therefore, the geometric median is the same as median. If the dimension $d>3$ or $d=3$ and $f>1$, then the computed geometric median $\mu$ is equal to the true median $\trueMedian = (x,0,\dots,0)$. These two geometric medians define the diameter of the minimum covering ball of all possible geometric medians. The radius of the ball is 0, since the two extreme medians coincide. Therefore, the competitive ratio of the safe area approach is:
\begin{center}
    $\distance(\text{safe area}, \trueMedian) \leq c \cdot \radEnc .$
\end{center}
Constant $c$ must be $c = \infty$, which implies that the approximation ratio is unbounded.

If the dimension $d=3$ and $f=1$, there are in total $5$ nodes, where $3$ non-faulty nodes are located at vector $v$, one non-faulty node is at $v_0$ and one Byzantine is also at $v_0$. We can, similarly to the first case, define two extreme medians. The true geometric median containing only correct nodes is at vector $v$. The other extreme geometric median is defined by two points at vector $v_0$ and two points at vector $v$. The median can be any point in the line segment between these two vectors. W.l.o.g. we choose the midpoint to be the geometric median. The distance between the true geometric median and the safe area is $x$. The diameter of the minimum covering ball defined by two extreme medians is $\frac{x}{2}$. The radius of the ball is $\frac{x}{4}$. The approximation ratio is: 
\begin{align*}
\frac{\distance(\text{safe area}, \trueMedian)}{\radEnc} =
\frac{x}{(x/4)} = 4.    
\end{align*}

\end{proof}

Another algorithm proposed to solve multidimensional approximate agreement referred to in \Cref{subsec: multidimApproxAgreement} is MDA. Here is a proposed adapted version of this algorithm for the geometric median aggregation rule. 

\begin{algorithm}
\caption{Minimum Diameter Approach}\label{alg:md}
\begin{algorithmic}[1]
\For{\textit{each round $r=1,2,\dots$}}
\For{\textit{each node $u_i$ with input $v_i$:}}
    \State \textit{broadcast $v_i$ reliably to all nodes}
    \State \textit{receive up to $n$ messages $M_i= \{v_j, j \in [n]\}$}
    \State \textit{compute} $\MD(M_i,n-t)$
    \State \textit{set} $v_i \leftarrow \geometricMedian\left(\MD(M_i,n-t)\right)$
\EndFor
\EndFor
\end{algorithmic}
\end{algorithm}

The MDA algorithm was shown in~\cite{centroid-paper} to give a good approximation of the mean aggregation rule. However, we show here that, adapted to the geometric median, the algorithm does not always converge. That is, the $\MD$ algorithm for the geometric median aggregation rule does not solve the multidimensional approximate agreement problem.  

\begin{lemma}\label{lem:md_no_convergence}
\Cref{alg:md} does not converge. 
\end{lemma}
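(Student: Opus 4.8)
The plan is to construct a single fixed adversarial instance on which \Cref{alg:md} admits an infinite execution that never achieves $\varepsilon$-agreement for, say, $\varepsilon = 1$. The phenomenon to exploit is that the minimum-diameter subset of size $n-t$ is not unique: even though reliable broadcast forces every non-faulty node to receive exactly the same multiset of messages, different non-faulty nodes may legitimately pick different minimum-diameter subsets, and the geometric medians of those subsets can be far apart. In contrast to the mean, the geometric median (in one dimension, simply the median) of a subset can coincide with an extreme coordinate of the received values, and this is precisely what makes the instance self-sustaining. I would carry out the argument in $\mathbb{R}^1$ and note that it embeds into $\mathbb{R}^d$ for any $d$ by using the first coordinate only.

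Concretely, take $n = 3t+1$ with $t = 2$ (so $n = 7$, $n-t = 5$), which respects the resilience bound $t \le n/3$ assumed throughout. Let the five non-faulty nodes hold the input multiset $\{0,0,1,2,2\}$ and let the two Byzantine nodes broadcast $0$ and $2$, so that in the first round every node receives $M = \{0,0,0,1,2,2,2\}$. The first step is a short verification that the minimum diameter of a size-$5$ sub-multiset of $M$ is exactly $2$: no interval of length less than $2$ contains five points of $M$ (the intervals $[0,1]$ and $[1,2]$ each contain only four), while the three ``windows'' $\{0,0,0,1,2\}$, $\{0,0,1,2,2\}$, $\{0,1,2,2,2\}$ all have diameter $2$ and have one-dimensional geometric medians $0$, $1$, and $2$ respectively.

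The core of the proof is a reproduction (fixed-point) step. Since \Cref{alg:md} only specifies that each node selects \emph{some} minimum-diameter subset, let two non-faulty nodes select the window with median $0$, one select the window with median $1$, and the remaining two select the window with median $2$. After the update, the five non-faulty nodes again hold the multiset $\{0,0,1,2,2\}$; the Byzantine nodes keep broadcasting $0$ and $2$; hence the received multiset in round $2$ is again $M$, and the same selection can be repeated. By induction over the rounds, in every round two non-faulty nodes hold the vectors $0$ and $2$, so $\lVert v_i - v_j\rVert_2 = 2$ for this pair in every round. Thus the $\varepsilon$-agreement condition fails in every round for every $\varepsilon \le 2$, i.e., \Cref{alg:md} does not converge and, in particular, does not solve multidimensional approximate agreement.

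The only delicate points are (i) the minimality verification --- one must be sure that the three exhibited windows genuinely attain the minimum diameter over all size-$(n-t)$ sub-multisets of $M$, so that their geometric medians $0$, $1$, $2$ are really attainable outputs; and (ii) the interpretation of the statement --- one must argue that the prescribed tie-breaking is an admissible behavior of the algorithm (the algorithm does not disambiguate which minimum-diameter subset is chosen), so that the non-convergent execution we build is a legitimate run of \Cref{alg:md}. Both are immediate here, and the remaining computations (diameters and one-dimensional medians of a handful of small multisets) are routine.
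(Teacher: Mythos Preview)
Your proof is correct and a bit leaner than the paper's. Both arguments build a self-reproducing configuration by exploiting the non-uniqueness of the minimum-diameter subset: when several size-$(n-t)$ subsets tie, different correct nodes may legitimately select different ones whose one-dimensional medians sit at opposite extremes, so the honest multiset is regenerated each round. The paper, however, first has the Byzantine nodes send their vectors to only half of the correct nodes, thereby giving the two halves asymmetric views in which all but one candidate subset have the desired extreme as median; only then does it invoke the tie-breaking freedom (``it is possible that all vectors in $U_1$ pick $v_1$''). You skip the selective-sending step entirely: the two Byzantine nodes broadcast $0$ and $2$ to everyone, every correct node sees the identical multiset $\{0,0,0,1,2,2,2\}$, and the non-determinism alone suffices. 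This buys you full compatibility with the reliable-broadcast primitive declared in the preliminaries (Bracha-style totality precludes a Byzantine sender being delivered at only some correct nodes), and the $7$-node instance makes every verification routine. The paper's construction, in exchange, makes the ``bad'' median the typical one among the tied subsets at each node, which mildly reinforces that the failure is not merely an artifact of adversarial tie-breaking.
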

\begin{proof}
    Consider the following setting: $n-t$ nodes in the system are non-faulty, $(n-t)/2$ of those starting with vector $v_1$ and the others starting with vector $v_2$. 
    Suppose now that Byzantine nodes pick vectors $v_1$ and $v_2$ (half of them on each vector), then the minimum covering ball of $v_1$ and $v_2$ is the minimum diameter ball that will be considered by all nodes during the first round of the algorithm. 
    Denote $D\coloneqq \lVert v_1-v_2 \rVert_2$ the diameter of this ball. 
    
    Moreover, let us consider the case where Byzantine nodes that chose $v_1$ only send their vector to half of the true nodes (denote $U_1$ this set), and Byzantine vectors who chose vector $v_2$ send their vector to the other true nodes (denote $U_2$ this set).

    Nodes in set $U_1$ receive $n-t + t/2$ vectors, and will choose a set of $n-t$ vectors of diameter $D$. However, all such sets have diameter $D$, and only one single set has $(n-t)/2$ vectors on $v_1$ and $v_2$ respectively. Hence, all sets of $n-t$ vectors of diameter $D$ but one have $v_1$ as a median. Thus, it is possible that all vectors in $U_1$ pick $v_1$ as their vector for starting round $2$.

    Similarly for nodes in set $U_2$, it is possible that all vectors in $U_2$ pick $v_2$ as their vector for starting round $2$.

    We then find ourselves at the beginning of round $2$ in the exact same configuration as in the beginning of round $1$. The Byzantine nodes hence just need to repeat this behavior to prevent the algorithm from ever converging. 
\end{proof}

Observe that the vector chosen by some node at the end of the first round of Algorithm~\ref{alg:md} is a $2$-approximation of the geometric median of the non-faulty nodes. This is because the computed vector is inside $\geoSet$ and thus at most $2\cdot\radEnc$ away from the non-faulty geometric median. Thus, even though the \MD algorithm for the geometric median aggregation rule does not converge, the locally chosen vectors by each node are still representative. In addition, this also means that the \MD algorithm for the geometric median computes a $2$-approximation of the geometric median of the non-faulty nodes in the centralized collaborative learning setting.

Finally, we consider Krum~\cite{blanchard2017machine,byzantine-primer}. This aggregation rule is not used as an approximate agreement algorithm in the literature, since median/medoid-based approximation algorithms would not converge (due to a similar argument to Lemma~\ref{lem:md_no_convergence}). 
In the following, we show that within one round of Krum (one single application of Equation~\ref{eq:Krum}) a server cannot compute a bounded approximation of the geometric median.

\begin{theorem} \label{Krum-unbounded-geom}
The approximation ratio of the Krum aggregation rule is unbounded.
\end{theorem}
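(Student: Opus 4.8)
The plan is to exhibit a concrete family of instances, parametrized by a scalar $x \to \infty$, in which Krum selects a vector that is a fixed positive distance from the true geometric median, while the minimum covering ball of $\geoSet$ has radius that stays bounded (ideally radius $0$), forcing the approximation constant $c$ to diverge. This mirrors the structure of the safe-area counterexample in Theorem~\ref{Safearea-unbounded-geom}: we want a configuration where all ``possible geometric medians'' coincide (so $\radEnc$ is $0$ or tiny), yet Krum's medoid-selection rule is fooled into returning a faraway input vector contributed by (or clustered around) the Byzantine nodes.

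Concretely, I would place $n-t$ true nodes split so that roughly $n-t-1$ of them sit at a common point $p$ (say the origin) and one true node sits far away at $q = (x,0,\dots,0)$, and let the $t$ Byzantine nodes also report values near $q$ (or exactly at $q$). The key is to choose the cardinalities so that, when Krum computes for each candidate $v_j$ the sum of distances to its $n-t-1$ nearest neighbors (Equation~\ref{eq:Krum}), a vector located at $q$ has a tight cluster of $n-t-1$ neighbors around it (the Byzantine reports plus the one far true node), giving it a \emph{smaller} score than the genuinely central point $p$, which has its $n-t-1$ nearest neighbors also clustered at $p$ but must be compared carefully. So I would instead invert this: put the \emph{majority} of true nodes at $p$ and have the Byzantine nodes cluster tightly with just enough true nodes to win. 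The cleanest version: let $t$ Byzantine nodes and a single true outlier node all coincide at $q$ while the remaining $n-t-1$ true nodes coincide at $p$; if $t+1 \ge n-t-1$, i.e. $n \le 2t+2$ — but recall $t < n/3$, so this cardinality balance must be engineered in the opposite direction. The honest route is therefore: cluster enough Byzantine-controlled points at $q$ that \emph{some} candidate near $q$ achieves Krum-score $0$ (all its $n-t-1$ closest points coincide with it), while simultaneously arranging that the true geometric median equals $p$ and that $\geoSet$ is a single point — the latter achieved, as in Theorem~\ref{Safearea-unbounded-geom}, by choosing cardinalities so that every size-$(n-t)$ subset's geometric median lands at the same location $p$.

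The key steps, in order: (1) fix $n$, $t$, $d$ and the node placement ($n-t$ true nodes with a specified split between $p$ and $q$, $t$ Byzantine nodes at $q$), choosing multiplicities so that every subset of size $n-t$ has geometric median exactly $p$ — then $\radEnc = 0$ and $\trueMedian = p$; (2) compute the Krum score $\sum_{l \in C_i}\lVert v_i - v_l\rVert$ for a representative candidate at $p$ and for a representative candidate at $q$, showing the $q$-candidate has strictly smaller (in fact zero) score because its $n-t-1$ nearest neighbours all coincide with it; (3) conclude Krum outputs a vector at $q$, at distance $\lVert q - p\rVert = x$ from $\trueMedian$; (4) since $\radEnc = 0$, no finite $c$ satisfies $x \le c\cdot\radEnc$, so the ratio is unbounded (and if one prefers a strictly positive $\radEnc$, perturb the outlier slightly to get $\radEnc = \delta$ fixed while $x\to\infty$, giving ratio $x/\delta \to \infty$).

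The main obstacle I anticipate is step (1): making \emph{all} size-$(n-t)$ subsets have the same geometric median is delicate, because a subset that includes the outlier at $q$ plus few points at $p$ could have its geometric median pulled toward $q$. This is why the multiplicity at $p$ must dominate — specifically one needs that in \emph{every} admissible subset, strictly more than half the points sit at $p$, so the geometric median (which, for points on a line through $p$ and $q$, is the weighted median) stays pinned at $p$. Verifying that the constraint $t < n/3$ is compatible with ``every $(n-t)$-subset has a strict majority at $p$ while still leaving $t+1$ coincident points at $q$ to zero out a Krum score'' requires checking a small system of inequalities on $n$ and $t$; if it turns out to be infeasible in general dimension, the fallback is the same two-regime split used in Theorem~\ref{Safearea-unbounded-geom} (a generic large case giving $\radEnc = 0$, and a tiny boundary case handled by direct computation yielding a large-but-finite ratio that still grows with $x$). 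Everything else — the Krum score comparison and the final ratio bound — is routine arithmetic on Euclidean distances between a handful of coincident point masses.
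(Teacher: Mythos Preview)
Your two-cluster construction has a genuine obstruction that you flag as a ``small system of inequalities'' to check, but in fact the system is infeasible. With $t+1$ points at $q$ and $n-t-1$ points at $p$, a candidate at $q$ has Krum score $(n-2t-1)\cdot x$ (its $t$ coincident neighbours contribute $0$, the remaining $n-2t-1$ nearest neighbours are at $p$), while a candidate at $p$ has Krum score $x$ (its $n-t-2$ coincident neighbours contribute $0$, one more neighbour comes from $q$). For Krum to prefer $q$ you need $n-2t-1<1$, i.e.\ $n\le 2t+1$, which is incompatible with $t<n/3$ for any $t\ge 1$. So the ``zero Krum score at $q$'' mechanism cannot fire, and your perturbation fallback would need an entirely different placement to make $\radEnc$ stay bounded while the Krum output wanders off---you have not supplied one.

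The paper sidesteps all of this with a much simpler idea: let the Byzantine nodes stay \emph{silent}. Then exactly $n-t$ vectors are received, there is a single subset of size $n-t$, so $\geoSet$ is a singleton and $\radEnc=0$. In this regime Krum's score for each $v_j$ is the sum of distances to all the other $n-t-1$ received vectors, i.e.\ Krum returns the medoid of the true inputs. Generically the medoid is not equal to the geometric median (the geometric median is almost never one of the sample points), so $\distance(\text{Krum output},\trueMedian)>0$ while $\radEnc=0$, and no finite $c$ works. The insight you are missing is that Byzantine activity is not needed to fool Krum: the structural fact that Krum outputs an input vector, combined with silent adversaries collapsing $\geoSet$ to a point, already gives an unbounded ratio.
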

\begin{proof}
    Consider a setting where Byzantine parties do not send any vectors to the correct nodes. That is, the received $n-t$ vectors are all from non-faulty nodes. Assume further a general case where the medoid of the received $n-t$ vectors does not correspond to the geometric median of these vectors. 

    Note that due to the fact that no Byzantine vectors are present in the calculation, the ball of all possible geometric medians is a single point. Since the medoid and the geometric median are assumed to not be the same point, the approximation ratio in this case is unbounded.
\end{proof}

Observe that the result from Theorem~\ref{Krum-unbounded-geom} also holds for Multi-Krum: Since the server receives exactly $n-t$ vectors in this example, every computed medoid will be computed on the same set of $n-t$ vectors. Thus, we have $\text{Multi-Krum}_q(v_1,\ldots, v_k) = \text{Krum}(v_1,\ldots, v_k)$, and the same unbounded approximation ratio holds for Multi-Krum.

\subsection{Hyperbox approach}

Let us now consider an adaptation of the hyperbox algorithm as a candidate for solving multidimensional approximate agreement for the geometric median aggregation rule. 

\begin{algorithm}[tbh]
\caption{Synchronous approximate agreement with hyperbox validity and resilience $t<n/3$}
\label{alg:hyperbox approach}
\begin{algorithmic}[1]
    \State In each round $r=1,2,\ldots$, each node $u_i, \forall i \in [n]$ with input vector $v_i$ executes the following:
        
            \State Broadcast $v_i$ reliably to all nodes
            \State Reliably receive up to $n$ messages $M_i = \{v_j, j\in [n]\}$
            \State Compute $\trueBox_i$ from $M_i$ by excluding $|M_i|-(n-t)$ values on each side 
            \State Compute $\medianBox_i$ from $M_i$ 
            \State Set $v_i$ to $\midpointFunction(\trueBox_i \cap \medianBox_i)$.
        
\end{algorithmic}
\end{algorithm}

Contrary to using the safe area algorithm, \Cref{alg:hyperbox approach} gives a bounded approximation of the true geometric median. And contrary to \Cref{alg:md}, it is guaranteed to converge and hence solved the multidimensional approximate agreement problem.

\begin{theorem}
    \Cref{alg:hyperbox approach} converges and its approximation ratio is upper bounded by $2\sqrt{d}$. 
\end{theorem}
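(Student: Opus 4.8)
The plan is to prove the two claims — convergence and the $2\sqrt{d}$-approximation — separately, reusing the structure of the analysis of the hyperbox algorithm for the mean in~\cite{centroid-paper} but substituting $\geoSet$ and $\medianBox$ for their mean-based counterparts. The crucial structural fact that makes the whole argument work is Lemma~\ref{lemma:geom-in-convex}: the true geometric median $\trueMedian$ lies in $\convexHull(\geoSet(i))$ for every correct node $u_i$, hence it lies in every local geometric median hyperbox $\medianBox_i$ (since $\medianBox_i$ is the bounding box of $\geoSet(i)$, and a bounding box contains the convex hull of the set). So the first step I would carry out is to verify that the intersection $\trueBox_i \cap \medianBox_i$ in Line~6 is always nonempty, by showing that $\trueMedian$ — or more precisely, any point of the true trusted hyperbox restricted appropriately — witnesses nonemptiness. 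Concretely: $\trueBox_i \subseteq \trueBox$ because node $u_i$ discards $|M_i|-(n-t)$ extreme values per coordinate and at most $t$ of its received values per coordinate are Byzantine, so the surviving interval endpoints are coordinates of true vectors; and I would argue $\trueMedian$ (which is computed on $n-f$ true vectors) lies inside $\trueBox$ coordinatewise, and inside every $\medianBox_i$ by the previous sentence, so $\trueMedian \in \trueBox_i \cap \medianBox_i$ for all $i$, and the midpoint in Line~6 is well-defined.

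For \textbf{convergence}, I would show that $\longestEdge$ of the hyperbox each node picks its midpoint from contracts by a constant factor per round. The standard argument: since all nodes' inputs in round $r$ lie inside a common hyperbox $H_r$ (the bounding box of all true inputs, which by the validity argument contains every node's output), each coordinate interval of $H_{r+1}$ is contained in the interval spanned by midpoints of sub-intervals of $H_r[k]$. Because any two correct nodes $u_i, u_j$ receive overlapping sets of at least $n-2t \geq n/3$ common true vectors, their locally trusted intervals $\trueBox_i[k]$ and $\trueBox_j[k]$ overlap, and the midpoints they compute lie in $H_r[k]$; a short case analysis (as in~\cite{centroid-paper}) gives $\longestEdge(H_{r+1}) \leq \tfrac{1}{2}\longestEdge(H_r)$ or similar, so after $O(\log(\longestEdge(H_1)/\varepsilon))$ rounds all outputs are within $\varepsilon$. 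I would need to check that replacing "means of subsets" by "geometric medians of subsets" does not break this: the only property used is that $\geoSet(i) \subseteq \convexHull(\{v_j : j \in M_i\})$, which holds because each geometric median of a subset lies in that subset's convex hull, hence in the bounding box $H_r$.

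For the \textbf{approximation ratio}, the target is to bound $\distance(v_i, \trueMedian)$ at termination by $2\sqrt d \cdot \radEnc$, where $\radEnc$ is the radius of the minimum covering ball of $\geoSet$. Both $v_i$ (the final midpoint) and $\trueMedian$ lie in $\medianBox_i$, whose diameter in each coordinate $k$ is at most the spread of $\geoSet(i)[k]$, which is at most $2\radEnc$ (since $\geoSet(i) \subseteq \geoSet \subseteq \ball(\geoSet)$, a ball of radius $\radEnc$). Hence $|v_i[k] - \trueMedian[k]| \leq 2\radEnc$ for every $k$, and summing over $d$ coordinates under the Euclidean norm gives $\norm{v_i - \trueMedian}_2 \leq \sqrt{d}\cdot 2\radEnc = 2\sqrt d\,\radEnc$, i.e.\ a $2\sqrt d$-approximation per Definition~\ref{def:approximation}. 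I would also note that this bound holds already after the first round (before agreement is reached) and is preserved, since every subsequent output stays inside the shrinking hyperboxes, all of which contain $\trueMedian$.

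\textbf{The main obstacle} I anticipate is the convergence half, specifically showing the common-hyperbox $H_r$ actually shrinks by a constant factor: one must rule out the bad scenario where Byzantine-supplied coordinates keep $\trueBox_i[k]$ and $\trueBox_j[k]$ far apart, or where the geometric-median subsets push midpoints to opposite ends. This is handled by the pigeonhole guarantee $n-2t > n/3$ on shared true vectors forcing interval overlap, together with the fact that both nodes' midpoints land in the \emph{intersection} with $\medianBox_i$, which is itself anchored by $\trueMedian$; but making the contraction constant explicit and uniform over all coordinate configurations is the delicate bookkeeping step, and it is where I would spend most of the effort, leaning on the corresponding lemma from~\cite{centroid-paper} and checking its hypotheses transfer verbatim to the geometric-median setting.
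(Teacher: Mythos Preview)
Your convergence sketch and your approximation argument are essentially the paper's, and the approximation part is in fact cleaner than the paper's (you only need $\trueMedian\in\medianBox_i$, which Lemma~\ref{lemma:geom-in-convex} gives, together with $v_i\in\medianBox_i$ and a coordinate-wise diameter bound). But your nonemptiness step contains a genuine error. You argue that $\trueBox_i\subseteq\trueBox$ and $\trueMedian\in\trueBox$, and then conclude $\trueMedian\in\trueBox_i\cap\medianBox_i$. The inclusion points the wrong way: from $\trueBox_i\subseteq\trueBox$ and $\trueMedian\in\trueBox$ you cannot deduce $\trueMedian\in\trueBox_i$, and this is false in general. Take $n=4$, $t=1$, $f=0$, $d=2$ with true inputs $(0,0),(1,0),(2,0),(1.5,10)$. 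After trimming one value per side in coordinate~2, $\trueBox_i[2]=\{0\}$; but the geometric median of the four points has strictly positive second coordinate (the three collinear points contribute zero $y$-gradient on the $x$-axis while the fourth pulls upward), so $\trueMedian\notin\trueBox_i$. The same issue undermines your later remark that ``all of the shrinking hyperboxes contain $\trueMedian$.''

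The paper does not use $\trueMedian$ as the witness for nonemptiness. Instead, for each coordinate $k$ it picks two explicit elements of $\geoSet(i)$: $g_\alpha$, the geometric median of the $n-t$ received vectors with the smallest $k$-th coordinates, and $g_\beta$, that of the $n-t$ with the largest. Because a geometric median lies in the convex hull of its input set, $g_\alpha[k]$ is at most the right endpoint of $\trueBox_i[k]$ and $g_\beta[k]$ is at least the left endpoint; since both lie in $\medianBox_i[k]$ by construction, the two intervals meet. This repair is purely local and leaves the rest of your plan intact --- in particular, the approximation bound survives because it only relies on $\trueMedian\in\medianBox_i$ and $v_i\in\medianBox_i$, and preservation across rounds follows from $v_i^{(r)}\in\trueBox_i^{(r)}\subseteq\trueBox^{(r)}\subseteq\medianBox$ by induction, without ever needing $\trueMedian\in\trueBox_i^{(r)}$.
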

\begin{proof}
    First we need to show that the algorithm can run, i.e. that at any round $r$, the intersection of $\trueBox_i$ and $\medianBox_i$ is non empty for every node $u_i$. We then need to show that the algorithm converges. And lastly we need to show that each node terminates on a vector that is at most $2\sqrt{d}\cdot \radEnc$ away from the true geometric median.  

    \textbf{Hyperbox intersection in each round.} 
Let us fix a coordinate $k\in [d]$ and let $v_1, \dots, v_m$ be the vectors received by node $u_i$ (hence, $m\geq n-t$). We now define $\phi_i: [m] \to [m]$ a bijection s.t.  $v_{\phi_i(j_1)} [k]\leq v_{\phi_i(j_2)} [k], \forall j_1, j_2\in [m]$. 
    The locally trusted hyperbox in coordinate $k$ is defined as
    \begin{align*}
        \trueBox_i[k] = \left[v_{\phi_i(t+1)}[k], v_{\phi_i(n-t)}[k]\right].
    \end{align*}
    Consider the two following elements of the set of possible geometric medians of $n-t$ vectors: 
    \begin{align*}
        &g_{\alpha} = \geometricMedian(v_{\phi_i(1)}, \dots, v_{\phi_i(n-t)}) \text{ and } \\
        &g_{\beta} = \geometricMedian(v_{(\phi_i(t+1)}, \dots, v_{\phi_i(m)}).
    \end{align*}
    Then, by definition of the geometric median and since $v_{\phi_i(j)}[k]$ are in increasing order, $g_{\alpha}[k] \leq v_{\phi_i(n-t)}[k]$. 
    Similarly, $g_{\beta}[k] \geq v_{\phi_i(t+1)}[k]$. 
    Moreover, the interval spanned by $g_{\alpha}[k]$ and $g_{\beta}[k]$ is included in $\medianBox_i$. 
    Hence, $\medianBox_i\cap \trueBox_i\neq \emptyset$. 

    \textbf{Algorithm convergence.} 
We denote $\trueBox = \trueBox^1$ the smallest hyperbox containing all true vectors, and $\trueBox^{r+1}$ the smallest hyperbox containing all the vectors computed by correct nodes in round $r$, which represents the input in round $r+1$. 
In round $r$, a node $u_i$ computes $\trueBox_i^r\subseteq \trueBox^{r}$, and then picks a vector in this hyperbox. 

To prove convergence, we will show that for any correct nodes $u_i$ and $u_j$ in any round $r\geq 1$:
\begin{align*}
\bigl|\midpointFunction(\trueBox_i^r\cap \medianBox_i^r) - \midpointFunction(\trueBox_j^r\cap \medianBox_j^r)\bigr|
\leq \frac{1}{2}\cdot \longestEdge(\trueBox^{r}).
\end{align*}

First, $\trueBox_i^r\subseteq \trueBox^{r}$. We show this for a fixed coordinate $k\in[d]$, which implies the general result since we work with hyperboxes. 
Indeed, when round $r$ starts, $\trueBox^{r}[k]$ is the smallest interval containing all $v_j[k]$ where $u_j$ are true nodes.
If the interval $\trueBox_i^r[k]$ does not contain any Byzantine values, then $\trueBox_i^r[k]\subseteq \trueBox^{r}[k]$ by definition. 
Otherwise, $\trueBox_i^r[k]$ contains at least one Byzantine value. Thus, when the minimum and maximum values were trimmed on each side of the interval, this Byzantine value remained in $\trueBox_i^r[k]$, and at least two true values were removed instead (one on each side of the interval). Therefore, $\trueBox_i^r[k]$ is included in an interval bounded by two true values, which is by definition included in $\trueBox^{r}[k]$.
Since this holds for each coordinate $k$, it also holds that $\trueBox_i^r\subseteq \trueBox^{r}$. 

Second, We define $\psi: [n]\times [n] \to [n]\times [n]$ a bijection s.t.  $v_{\psi(i,j_1)} [k]\leq v_{\psi(i,j_2)} [k], \forall i, j_1, j_2\in [n]$ where $v_{i, j}$ is the vector received by node $u_i$ from node $u_j$.   
For node $u_i$, $\trueBox_i^r[k] = \bigl[v_{\psi(i, m_i-(n-t)+1)}[k],\allowbreak v_{\psi(i, n-t)}[k]\bigr]$, and for node $u_j$ $\trueBox_j^r[k] = \bigl[v_{\psi(j, m_j-(n-t)+1)}[k], v_{\psi(j, n-t)}[k]\bigr]$, where $m_i$ and $m_j$ are the number of messages received by nodes $u_i$ and $u_j$ respectively. Given all projections on true vectors in coordinate $k$, denote $t_\ell$ the minimum and $t_u$ the maximum of those values. 
Since $\trueBox_i^r[k]\subseteq \trueBox^{r}[k]$, $v_{\psi(i, m_i-(n-t)+1)}[k]\geq t_\ell$ and $v_{\psi(i, n-t)}[k]\leq t_u$. 
Similarly, $v_{\psi(j, m_i-(n-t)+1)}[k]\geq t_\ell$ and $v_{\psi(j, n-t)}[k]\leq t_u$. 
Moreover, since projections of Byzantine vectors can be inside $\trueBox[k]$, the intervals $\trueBox_i[k]$ and $\trueBox_j[k]$ can be computed by removing up to $m_i-(n-t)$ and $m_j-(n-t)$ true values on each side respectively. 
Suppose w.l.o.g. that $m_i\leq m_j$, and denote $v^*_1, \dots v^*_{n-f}$ the vectors of true nodes.
Let us define a bijection $\lambda: [n-f]\to [n-f]$ s.t. $v^*_{\lambda(j_1)}[k]\leq v^*_{\lambda(j_2)}[k], \forall j_1, j_2\in [n-f]$. 
Let $t_{1}\coloneqq v^*_{\lambda(m_i-(n-t)+1)}[k]$ and $t_{2}\coloneqq v^*_{\lambda(2n-f-t-m_i)}[k]$ , these true values are necessarily inside both $\trueBox_i[k]$ and $\trueBox_j[k]$. 
Then, $v_{\psi(i, m_i-(n-t)+1)}[k]\leq t_{1}$ and $v_{\psi(i, n-t)}[k] \geq t_{2}$. 
Similarly, $v_{\psi(j, m_i-(n-t)+1)}[k]\leq t_{1}$ and $v_{\psi(j, n-t)}[k] \geq t_{2}$. 
We can now upper bound the distance between the computed midpoints of the nodes $u_i$ and $u_j$: 
\begin{align*}
    \Bigl|\midpointFunction\bigl(\trueBox_i^r[k]\bigr) - \midpointFunction\bigl(\trueBox_j^r[k]\bigr)\Bigr|
    \leq \frac{t_u-t_{1}}{2} - \frac{t_{2} - t_\ell}{2} 
    \leq \frac{t_u-t_\ell}{2}
    \leq \frac{\longestEdge(\trueBox^{r})}{2}.
\end{align*}
This inequality holds for every pair of nodes $u_i$ and $u_j$ and thus, for each coordinate $k$, we get
\begin{align*}
    \max_{i, j\in [n]}\bigl|\midpointFunction\bigl(\trueBox_i^r[k]\bigr) - \midpointFunction\bigl(\trueBox_j^r[k]\bigr)\bigr| &\leq  \longestEdge(\trueBox^{r})/2\\
     \Leftrightarrow\ \  \longestEdge\bigl(\trueBox^{r+1}[k]\bigr) &\leq  \longestEdge(\trueBox^{r})/2.
\end{align*}

After $R$ rounds, $\longestEdge(\trueBox^R)\leq \frac{1}{2^R}\cdot \longestEdge(\trueBox)$ holds. 
Since there exists $R\in \mathbb{N}$ s.t.\ $\frac{1}{2^R}\cdot \longestEdge(\trueBox)\leq \varepsilon$, the algorithm converges.

    \textbf{Approximation ratio.}
First, observe that the radius of the minimum covering ball of $\geoSet$ is always at least  $\max_{x, y\in \geoSet}\bigl(\distance(x, y)\bigr)/2$.

Next, let us upper bound the distance between $\trueMedian$ and the furthest possible point from it inside $\medianBox$. W.l.o.g., we assume that $\max_{x, y\in \geoSet}\bigl(\distance(x, y)\bigr) = 1$.
We consider the relation between the minimum covering ball and $\medianBox$.
Observe that each face of the hyperbox $\medianBox$ has to contain at least one point of $\geoSet$. 
If $\medianBox$ is contained inside the ball, i.e.\ if the vertices of the hyperbox lie on the ball surface, the computed approximation of \Cref{alg:hyperbox approach} would always be optimal. 

The worst case is achieved if the ball is (partly) contained inside $\medianBox$. Then, the optimal solution might lie inside $\medianBox$ and the ball, while the furthest node may lie on one of the vertices of $\medianBox$ outside of the ball. The distance of any node from $\trueMedian$ in this case is upper bounded by the diagonal of the hyperbox. Since the longest distance between any two points was assumed to be $1$, the hyperbox is contained in a unit cube. 
$\medianBox$ can be the unit cube itself and thus the largest distance between two points of $\medianBox$ is at most $\sqrt{d}$.  

The approximation ratio of \Cref{alg:hyperbox approach} can hence be upper bounded by:
\begin{align*}
\frac{\max_{x\in \medianBox}\bigl(\distance(\trueMedian, x)\bigr)}{\radEnc} 
&\le 2\cdot\frac{\max_{x\in \medianBox}\bigl(\distance(\trueMedian, x)\bigr)}{\max_{x, y\in \geoSet}\bigl(\distance(x, y)\bigr)} 
\le 2\cdot\frac{\sqrt{d}}{1} 
= 2\sqrt{d}.
\end{align*}

\end{proof}

\section{Empirical evaluation}\label{sec: practice}
Given our formal analysis of the single-round aggregation, we now perform an empirical evaluation of the algorithms to understand how the convergence of the approximate agreement algorithms influences the convergence of the machine learning model. In addition, we want to investigate how the approximation ratio within one learning round relates to the final accuracy of the model. In the empirical results, we apply the aggregation rules discussed in this paper in every learning round. 
This means that it is hard to link the empirical results to the theoretical results of this paper, which only evaluate the quality of the aggregation vector in one single learning round.

\subsection{Methodology}
A centralized and a decentralized collaborative learning model for solving classification tasks are implemented in Python using the Tensorflow library -- an end-to-end platform for solving machine learning tasks. 

The models are evaluated on the MNIST from Kaggle\footnote{\url{https://www.kaggle.com/datasets/scolianni/mnistasjpg}, accessed on 27.02.2025} and CIFAR10 dataset \footnote{\url{https://www.cs.toronto.edu/~kriz/cifar.html}, accessed on 27.02.2025}. The MNIST dataset contains 42,000  $28 \times 28$ images of handwritten digits, whereas CIFAR10 has 60,000 $32 \times 32$ color images in 10 classes, out of which 50,000 are training images and 10,000 test images. The MNIST dataset is split into train and test data with ratio $9:1$. 
We consider the uniform and 2 cases of non-uniform data distributions.
The first case is mild heterogeneity, where each class from the train dataset is split into $10$ parts, where $8$ parts contain $10\%$ of the class, one part $5\%$ and one part $15\%$ of the class. The second case is extreme heterogeneity, also known as 2-class heterogeneity. The dataset is sorted and split into 20 pieces. Each client gets randomly 2 parts of the data, so that each client has up to 2 classes of data in its local dataset. Note that the scenarios where clients have different local dataset sizes are not taken into account, as Byzantine clients could exploit this variation to their advantage. 

For stochastic gradient computation, a random batch of data is chosen and loss is computed using categorical cross-entropy. The gradient estimate is calculated using \textit{tape.gradient} with respect to the model's trainable variables.

The underlying neural network for solving the image classification task on MNIST dataset is a MultiLayer Perceptron (MLP) with 3 layers. The learning rate is set to $\eta = 0.01$ and the decay is calculated with respect to the number of global communication rounds (epochs), i.e. $decay = \frac{\eta}{rounds}$. The approach for decaying over global instead of local (current) epoch was proposed in \cite{zhao2018federated}. 

For the CIFAR10 dataset we implemented CifarNet, a medium-sized convolutional neural network with thousands of trainable parameters and the ability to capture spatial relationships in colored images. 

In the experiments, we set the number of clients to $n=10$ and number of Byzantine nodes to $f=1$ and $f=2$. We consider the sign flip attack \cite{park2024signsgd} . 
The attack consists of $f$ Byzantine clients computing their gradients and then inverting their sign. Flipped gradients are sent to either the central server or all other clients, depending on the architecture. Such an attack is difficult to detect and thus the Byzantine gradient is used in computations in the same way as other local gradients. 

\subsection{Empirical results}

\begin{figure}
    \centering
    \includegraphics[width=0.7\linewidth]{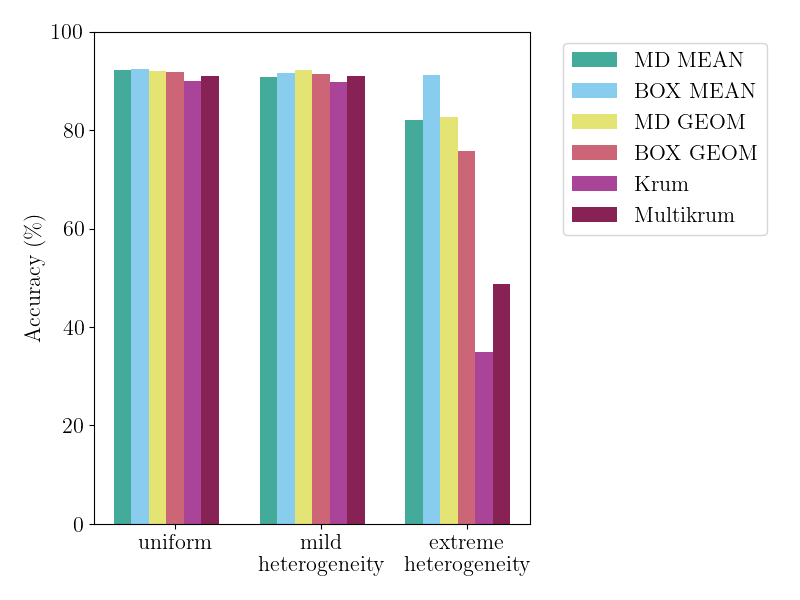}
    \caption{Centralized collaborative learning with $f=1$ on MLP architecture and MNIST dataset, under different heterogeneity}
    \label{fig:mlpf1}
\end{figure}
In the following, we evaluate mean and geometric median using Algorithm~\ref{alg:md} (\mdmedian) and Algorithm~\ref{alg:hyperbox approach} (\boxmedian) under sign flip attack in centralized and decentralized collaborative learning with MNIST and CIFAR10 dataset. For the geometric median computation, the Weiszfeld algorithm is used \cite{weiszfeld1937point}. In the centralized setting, we additionally test Krum and Multi-Krum with $q=3$ defined in \ref{krum}. 
For comparison, we also evaluate the hyperbox algorithm (\boxmean) and the minimum diameter averaging algorithm (\mdmean), described in \Cref{subsec: multidimApproxAgreement}.

Figure \ref{fig:mlpf1} illustrates achieved accuracy of different aggregation algorithms in the centralized collaborative learning model on MLP architecture using the MNIST dataset. We set $f=1$.
Firstly, all methods perform better with uniform and mild heterogeneous data, compared to extreme heterogeneous data. Algorithms \mdmean, \mdmedian, \boxmean and \boxmedian achieve over $91\%$ accuracy with uniform and mild heterogeneous data distribution. Krum and Multi-Krum perform well on uniform and mildly heterogeneous data but fail to exceed $50\%$ accuracy in the extremely heterogeneous setting. This is because both methods rely on selecting and averaging a small number of input points ($q=1$ or $q=3$), which, in extreme heterogeneity, are too far apart to provide a reliable estimate.

\begin{figure}
    \begin{subfigure}[b]{0.49\textwidth}
    \includegraphics[width=0.99\linewidth]{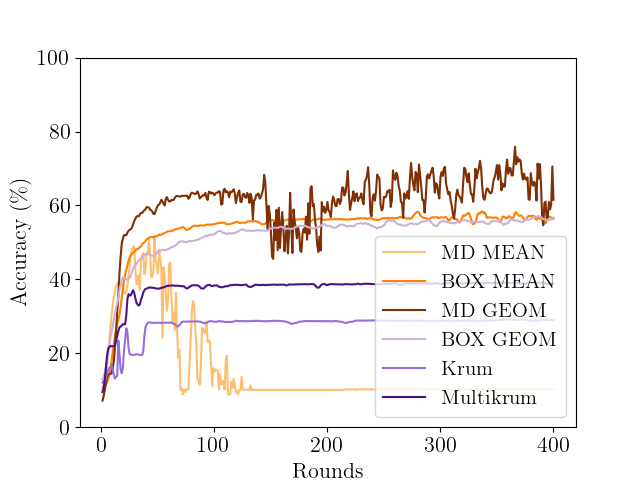}
    \caption{MLP on MNIST dataset with $f=2$ and \\extreme heterogeneity}
    \label{fig:mlpf2}
    \end{subfigure}
    \begin{subfigure}[b]{0.49\textwidth}
    \includegraphics[width=0.99\linewidth]{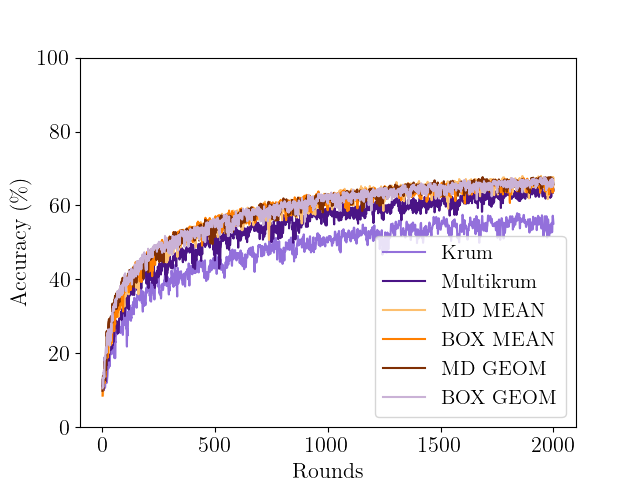}
    \caption{CifarNet on CIFAR10 dataset with $f=1$ and mild heterogeneity}
    \label{fig:cifarnet}
    \end{subfigure}
    \caption{Centralized collaborative learning on MLP and CifarNet, using MNIST and CIFAR10 dataset}
    \label{fig:centralizedcollab}
\end{figure}
Figure \ref{fig:centralizedcollab} illustrates centralized collaborative learning in a more extreme scenario, on MLP and MNIST dataset in Figure \ref{fig:mlpf2}, and on CifarNet and CIFAR10 dataset in Figure \ref{fig:cifarnet}. In Figure \ref{fig:mlpf2} we consider the extreme heterogeneous setting with 2 Byzantine sign flip attacks. It can be observed that \mdmean fails to converge and \mdmedian is unstable. Krum and Multikrum converge with low accuracy of $30\%$ and $39\%$, respectively. \boxmean and \boxmedian converge and score $57\%$ accuracy. The algorithm \mdmedian achieves the best accuracy, illustrating the fact that this algorithm has the best approximation ratio in the centralized setting (\Cref{sec: theory}). 

Figure \ref{fig:cifarnet} shows centralized collaborative learning on CifarNet, evaluated on CIFAR10 dataset. Since CifarNet is more complex than the MLP and CIFAR-10 consists of colored images, unlike MNIST, the accuracy of all methods drops to at most $70\%$. Due to its complexity, CifarNet also requires more communication rounds to converge, than the MLP. Algorithms \boxmedian, \boxmean, \mdmedian and \mdmean achieve over $67\%$ accuracy, whereas Multikrum scores $64\%$. Krum performs significantly worse and achieves $55\%$ accuracy.

\begin{figure}[tb]
    \begin{subfigure}[b]{0.49\textwidth}
    \includegraphics[width=0.99\linewidth]{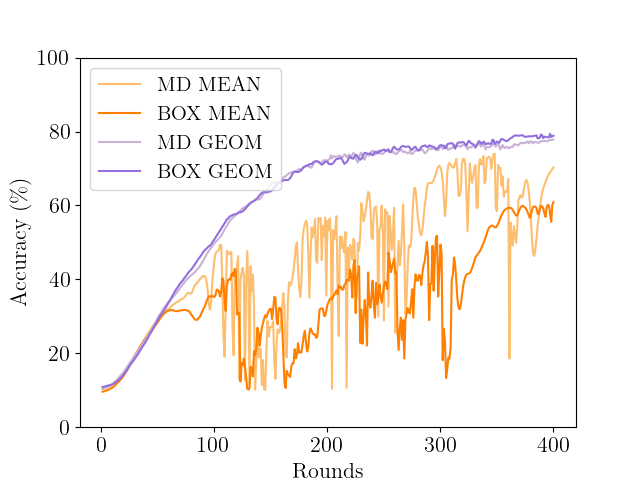}
    \caption{MLP with $f=1$}
    \label{mlp-decentralized-f1}
    \end{subfigure}
    \begin{subfigure}[b]{0.49\textwidth}
    \includegraphics[width=0.99\linewidth]{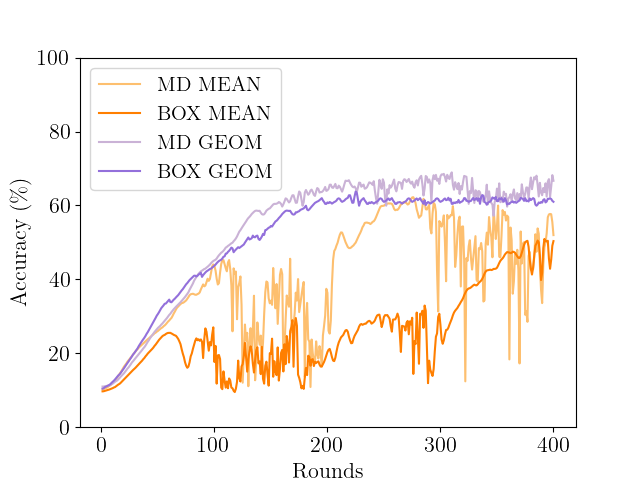}
    \caption{MLP with $f=2$}
    \label{mlp-decentralized-f2}
    \end{subfigure}
    \caption{Decentralized collaborative learning model on MLP architecture with mild heterogeneous data}
   \label{fig:decentralized}
\end{figure}

In Figure \ref{mlp-decentralized-f1} we consider decentralized collaborative learning model with MLP architecture and $f=1$. It can be observed that mean-based aggregation rules do not converge under the sign flip attack. Upon deeper analysis, some local models after round $100$ learn well and some do not. This happens because models agree on vectors that do not suit them well. Note that, clients update their models after performing aggregation rules.
The local gradients clients compute in the next round are also bad, since the parameters of the model worsened in the previous round. 
In contrast, \mdmedian and \boxmedian both converge and achieve $77.8\%$ and $78.8 \%$ accuracy, respectively. 

Figure \ref{mlp-decentralized-f2} shows decentralized collaborative learning with MLP architecture under 2 Byzantine sign flip attacks. \mdmean and \boxmean fail to converge, which correlates with the result from Figure \ref{mlp-decentralized-f1}. \mdmedian scores $65\%$ but is considered to be unstable, whereas \boxmedian seems to converge with $62\%$ accuracy. Figure \ref{fig:decentralized} highlights the advantages of geometric median-based aggregation algorithms compared to mean-based aggregation algorithms.

\subsection{Discussion}

We first discuss the centralized collaborative learning setting. In this setting, the results of the algorithms differ for extremely heterogeneous data under two sign flip failures, see Figure \ref{fig:mlpf2}. In particular, \mdmedian achieves better accuracy than \boxmedian, and both these algorithms outperform Multi-Krum and Krum. This reflects our theoretical results, where we showed that \mdmedian computes a $2$-approximation of the geometric median in the centralized collaborative learning setting, the \boxmedian algorithm a $\sqrt{10}\approx 3.16$-approximation, while Krum and Multi-Krum have unbounded approximation ratios in the worst case. 
For uniform and mildly heterogeneous data distributions we do not find such a difference in accuracies, see Figure \ref{fig:mlpf1}. 

In the decentralized collaborative learning setting, our experimental results indicate that the convergence of the approximate agreement subroutine in one learning round does not influence the convergence of the machine learning model. Recall that the \MD algorithm for the geometric median does not converge, and there can be two groups of nodes with vastly different gradients that use the respective gradient to update their local models. One would expect that such a setting would prevent \mdmedian from converging. Such a scenario does not seem to appear under sign flip attacks in practice. The convergence of the model for \mdmedian (see Figure \ref{mlp-decentralized-f1}) suggests that small discrepancies among the gradients in one learning round do not influence the convergence of the ML model. Moreover, the approximated aggregation vector (mean or median) seems to have a more important role in the decentralized setting. Median-based approaches outperform mean-based approaches under mildly heterogeneous data distribution, see Figure \ref{fig:decentralized}. Under extremely heterogeneous data distribution, however, both aggregation rules fail, suggesting that a different approach may be necessary in this case.

\section{Related work}\label{sec: related_work}

Federated learning was introduced by McMahan et al.~\cite{google-federated, fedavg} for supervised learning, where the data of clients is either sensitive or too large to be stored in data center. They consider an unbalanced dataset with non-i.i.d. data in a massively distributed setting, with clients which have little data on average. The training is arranged in a server-client architecture: the server manages model parameters and the client handles the training. While being robust, this paper and much of the follow-up work~\cite{konečný2015federated,konečný2017federated,pmlr-v54-mcmahan17a} do not tolerate malicious attacks.

\textbf{Byzantine attacks in federated learning. }
Byzantine attacks are defined as arbitrary worst-case attacks in the system. In the literature, however, often specific malicious behavior is considered that can harm the training process in machine learning.
Blanchard et al.~\cite{blanchard2017machine} show that federated learning approaches based on linear combinations of the input cannot tolerate a single Byzantine failure. They consider a single attacker who knows the local updates of all benign clients. Such an attacker can set its update to the opposite of the combined normal updates, thus preventing convergence.

Jere et al.~\cite{9308910} provide a survey of malicious attacks in federated learning. They divide the attacks into model poisoning\cite{9464278}, comprising of label flipping and backdoor attacks~\cite{pmlr-v108-bagdasaryan20a}; and data poisoning attacks, including gradient manipulation~\cite{247652,blanchard2017machine} and training rule manipulation~\cite{pmlr-v97-bhagoji19a}. 

Shi et al.~\cite{shi2022challenges} make a similar classification to \cite{9308910} and propose the \emph{weight attack}, which bypasses existing defense schemes. The idea is to exploit the fact that a central entity has no effective means to check the size and quality of one's data. Therefore, Byzantine clients can claim to have a larger dataset than the rest and gain high weight parameters. This attack is not considered in our work, as we assume that all clients in the system have the same amount of data.

The main attack considered in this paper is the sign flip attack. In~\cite{jungle}, a multiplicative factor is added to the sign flip attack. While this attack aims to increase the harm with an increasing multiplicative factor, it also makes it easier to remove the attacker from the training over time.
Park and Lee~\cite{park2024signsgd} consider the sign flip attack in a more powerful setting: they study the \textit{signSGD} algorithm~\cite{jin2020signSGD, bernstein2018signsgd}, where instead of transmitting gradients, only signs of gradients are exchanged.

\textbf{Byzantine-tolerant federated learning. }
The first Byzantine-tolerant federated learning algorithms address Byzantine behavior of clients, but they rely on a trusted central entity~\cite{feng2015distributedrobustlearning,chen_2017,blanchard2017machine,Su-Vaidya-2016,cwmDEF,pmlr-v139-karimireddy21a}. 

The work of El-Mhamdi et al. \cite{el2020genuinely} explores the general Byzantine-tolerant distributed machine learning problem, where no individual component can be trusted. Their idea is to replicate the server onto multiple nodes, which appear as one central entity to the user, thus making the central entity Byzantine-tolerant as well. 

The first fully decentralized Byzantine-tolerant federated learning model was proposed by El-Mhamdi et al.~\cite{jungle}. The authors first define the collaborative learning model in detail and show the equivalence of the collaborative learning problem and averaging agreement. Additionally, two optimal algorithms for averaging agreement are implemented~\cite{guerraoui2021garfield}: minimum-diameter based algorithm, which asymptotically optimal with respect to correctness, when nearly all nodes are non-faulty, and trimmed mean algorithm with optimal Byzantine resilience $t<\frac{n}{3}$. 

Guerraoui et al. standardize the study of Byzantine machine learning and provide an overview 
of shortcomings of widely used approaches in a survey~\cite{byzantine-primer} and a follow-up work~\cite{farhadkhani2024brief}.

\textbf{Aggregation rules in federated learning. }
Besides using the mean as an aggregation function, many other aggregation rules have been considered in the literature~\cite{byzantine-primer}. 

Pillutla et al.~\cite{pillutla2022robust} use the geometric median~\cite{kuhn-geom-history} as an aggregation function. 
Despite its simple definition, the geometric median is hard to compute~\cite{BAJAJ198699} and requires an approximation algorithm. To this end, the Weiszfeld algorithm for computing geometric median~\cite{weiszfeld1937point,weiszfeld2009point} is used in \cite{pillutla2022robust} and in this work.

El-Mhamdi et al.\cite{guerraoui2018medoid} propose to use geometric medoids. Similar to the geometric median, the geometric medoid minimizes the sum of distances to all points, but its value is among input vectors. Naturally, medoid is easier to compute than the geometric median, since it requires testing every input vector regarding the distances to other points. However, in their experiments, medoid failed to produce a useful model.

Another aggregation rule named \emph{Krum} was proposed by Blanchard et at. \cite{blanchard2017machine}. Krum is calculated as the vector that minimizes the sum of squared distances to its $n-f$ closest vectors. Krum was proposed as an alternative to looking at all possible subsets of size $n-f$ and then considering the one with minimum diameter, as this approach has exponential runtime. In their experiments, Krum is proven to be robust against Byzantine attacks compared to the classical averaging aggregation functions. 

\textbf{Byzantine Agreement.}
Byzantine agreement was originally introduced by Lamport~\cite{10.1145/357172.357176} to deal with unpredictable system faults. It requires the nodes to agree on the same value (agreement) within finite time (termination) while outputting a non-trivial solution (validity). Multidimensional approximate agreement~\cite{10.1145/2484239.2484256,10.1145/2488608.2488657,attiya_et_al:LIPIcs.OPODIS.2022.6,10.1145/3558481.3591105} generalizes the input values of the nodes to vectors and relaxes the agreement condition such that the nodes can terminate when the output vectors are in the vicinity of each other. This allows one to speed up the communication-intensive distributed agreement algorithms.

El-Mhamdi et al.~\cite{jungle} draw a first connection between approximate agreement and distributed collaborative learning. They show that averaging agreement, defined as approximate agreement where the output vectors are close to the mean of the benign vectors, is equivalent to distributed collaborative learning. Their distance between the output vectors is bounded by the maximum distance between the furthest benign input vectors. 
Cambus and Melnyk \cite{centroid-paper} refine the idea to approximate the mean in the setting of approximate agreement. They introduce an approximation measure used in this paper. This approximation ratio allows one to compare the output vectors of an approximate agreement algorithm to a solution given by an optimal algorithm that cannot identify Byzantine values.

\section{Conclusion}\label{sec: conclusion}

This paper analyzed the geometric median as the aggregation rule for fully distributed Byzantine-tolerant collaborative learning. The theoretical analysis showed that using the geometric median directly, or in combination with the safe area or the minimum diameter, does not lead to convergence of the agreement routine or to a reasonable approximation of the geometric median. The hyperbox approach in combination with the geometric median was presented as a possible approach that provides the desirable theoretical guarantees. The practical evaluation revealed that approaches based on the geometric median provide more stable solutions under the sign flip attack in the distributed collaborative learning setting. In the future, it would be interesting to investigate whether \mdmedian also converges under Byzantine behavior that uses information from multiple learning rounds. In addition, new aggregation rules besides the mean and the median should be investigated for distributed collaborative learning under extremely heterogeneous data distributions.

\bibliography{references}

\end{document}